\newcommand{\diag}{\mathop{\mathrm{diag}}}
\newcommand{\fft}{\mathfrak{F}}
\newtheorem{theorem}{Theorem}[section]
\newtheorem{corollary}{Corollary}[theorem]
\newtheorem{lemma}[theorem]{Lemma}
\newtheorem{definition}{Definition}[section]
\newtheorem{proposition}{Proposition}[section]
\icmltitlerunning{Analogical Inference for Multi-relational Embeddings}
\begin{document} 

\twocolumn[
\icmltitle{Analogical Inference for Multi-relational Embeddings}

\begin{icmlauthorlist}
\icmlauthor{Hanxiao Liu}{cmu}
\icmlauthor{Yuexin Wu}{cmu}
\icmlauthor{Yiming Yang}{cmu}
\end{icmlauthorlist}

\icmlaffiliation{cmu}{Carnegie Mellon University, Pittsburgh, PA 15213, USA}

\icmlcorrespondingauthor{Hanxiao Liu}{hanxiaol@cs.cmu.edu}

\vskip 0.3in
]

\printAffiliationsAndNotice{}

\begin{abstract} 
Large-scale multi-relational embedding refers to the task of learning the latent representations for entities and relations in large knowledge graphs. An effective and scalable solution for this problem is crucial for the true success of knowledge-based inference in a broad range of applications. This paper proposes a novel framework for optimizing the latent representations with respect to the \textit{analogical} properties of the embedded entities and relations. By formulating the learning objective in a differentiable fashion, our model enjoys both theoretical power and computational scalability, and significantly outperformed a large number of representative baseline methods on benchmark datasets. Furthermore, the model offers an elegant unification of several well-known methods in multi-relational embedding, which can be proven to be special instantiations of our framework.
\end{abstract} 

\section{Introduction}
\label{sec:introduction}

Multi-relational embedding, or knowledge graph embedding, is the task of finding the latent representations of entities and relations for better inference over knowledge graphs.  This problem has become increasingly important in recent machine learning due to the broad range of important applications of large-scale knowledge bases,
such as Freebase \cite{bollacker2008freebase}, DBpedia \cite{auer2007dbpedia} and Google's Knowledge Graph \cite{singhal2012introducing},
including 
question-answering \cite{ferrucci2010building}, information retrieval \cite{dalton2014entity} and
natural language processing \cite{gabrilovich2009wikipedia}.

A knowledge base (KB) typically stores factual information as subject-relation-object triplets. 
The collection of such triplets forms a directed graph whose nodes are entities and whose edges are the relations among entities.
Real-world knowledge graph is both extremely large and highly incomplete by nature \cite{min2013distant}.  How can we use the observed triplets in an incomplete graph to induce the unobserved triples in the graph presents a tough challenge for machine learning research.  
\begin{figure}[t]
    \centering
    \begin{tikzpicture}[
    scale=0.8, every node/.style={scale=0.8},
      back line/.style={dashed},
      cross line/.style={preaction={draw=white, -,line width=6pt}}]
      \node (A) {\textcolor{blue}{$sun$}};
      \node [right of=A, node distance=5cm] (B) {\textcolor{blue}{$planets$}};
      \node [below of=B, node distance=2.5cm] (D) {\textcolor{blue}{$mass$}};

      \node (A1) [right of=A, above of=A, node distance=2.5cm] {\textcolor{red}{$nucleus$}};
      \node [right of=A1, node distance=5cm] (B1) {\textcolor{red}{$electrons$}};
      \node [below of=B1, node distance=2.5cm] (D1) {\textcolor{red}{$charge$}};

      \draw[->, back line] (A1) to node [below, sloped] {$attract$} (D1);
      \draw[->, cross line] (A) to node [below, sloped] {$attract$} (D);
      \draw[->, cross line] (A) to node [above] {$surrounded\_by$} (B);
      \draw[->, cross line] (A1) to node [above] {$surrounded\_by$} (B1);
      \draw[->, cross line] (B) to node [below, sloped] {$made\_of$} (D);
      \draw[->, cross line] (B1) to node [below, sloped] {$made\_of$} (D1);
      \draw[->, cross line] (A) to node [above, sloped] {$scale\_down$} (A1);
      \draw[->, cross line] (B) to node [above, sloped] {$scale\_down$} (B1);
      \draw[->, cross line] (D) to node [above, sloped] {$scale\_down$} (D1);
    \end{tikzpicture}
    \caption{Commutative diagram for the analogy between the Solar System (\textcolor{red}{red}) and the Rutherford-Bohr Model (\textcolor{blue}{blue}) (atom system).
    By viewing the atom system as a ``miniature'' of the solar system (via the $scale\_down$ relation),
    one is able to complete missing facts (triplets) about the latter by mirroring the facts about the former.
        The analogy is built upon three basic analogical structures (parallelograms):
        ``$sun$ is to $planets$ as $nucleus$ is to $electrons$'', ``$sun$ is to $mass$ as $nucleus$ is to $charge$'' and ``$planets$ are to $mass$ as $eletrons$ are to $charge$''.
    }
    \label{fig:solar}
\end{figure}
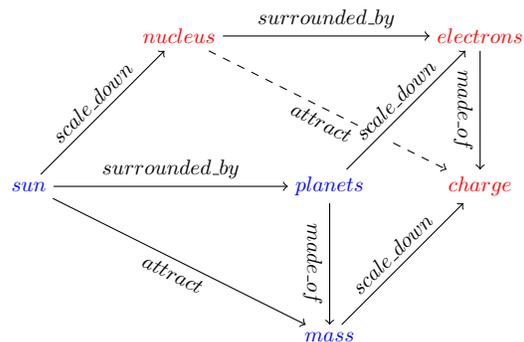

Various statistical relational learning methods \cite{getoor2007introduction, nickel2015review}
have been proposed for this task,
among which vector-space embedding models are most particular due to their advantageous performance and scalability \cite{bordes2013translating}. The key idea in those approaches
is to find dimensionality reduced representations for both the entities and the relations, and hence 
force the models to generalize during the course of compression.
Representative models of this kind include tensor factorization
\cite{singhal2012introducing, nickel2011three},
neural tensor networks \cite{socher2013reasoning, chen2013learning},
translation-based models \cite{bordes2013translating, wang2014knowledge, lin2015learning},
bilinear models and its variants \cite{DBLP:journals/corr/YangYHGD14a, DBLP:conf/icml/TrouillonWRGB16},
pathwise methods \cite{guu2015traversing}, embeddings based on holographic representations \cite{DBLP:conf/aaai/NickelRP16}, and product graphs that utilizes additional site information for the predictions of unseen edges in a semi-supervised manner \cite{liu2015bipartite,liu2016cross}.
Learning the embeddings of entities and relations can be viewed as a knowledge induction process, as those induced latent representations can be used to make inference about new triplets that have not been seen before.

Despite the substantial efforts and great successes so far in the research on multi-relational embedding, one important aspect is missing, i.e., to study the solutions of the problem from the \textit{analogical inference} point of view, by which we mean to rigorously define the desirable analogical properties for multi-relational embedding of entities and relations, and to provide algorithmic solution for optimizing the embeddings w.r.t.\ the analogical properties.
We argue that analogical inference is particularly 
desirable for knowledge base completion,
since for instance if system $A$ (a subset of entities and relations) is analogous to system $B$ (another subset of entities and relations),
then the unobserved triples in $B$ could be inferred by mirroring their counterparts in $A$. Figure \ref{fig:solar} uses a toy example to illustrate the intuition, where system $A$ corresponds to the solar system with three concepts (entities), and system $B$ corresponds the atom system with another three concepts. An analogy exists between the two systems because $B$ is a ``miniature'' of $A$. As a result, knowing how the entities are related to each other in system $A$ allows us to make inference about how the entities are related to each other in system $B$ by analogy.

Although \textit{analogical reasoning} was an active research topic in classic AI (artificial intelligence), early computational models mainly focused on non-differentiable rule-based reasoning \cite{gentner1983structure, falkenhainer1989structure, turney2008latent}, which can hardly scale to very large KBs such as Freebase or Google's Knowledge Graph.  How to leverage the intuition of analogical reasoning via statistical inference for automated embedding of very large knowledge graphs has not been studied so far, to our knowledge. 

It is worth mentioning that analogical structures have been observed in the output of several word/entity embedding models \cite{mikolov2013distributed, pennington2014glove}. However, those observations stopped there as merely empirical observations. Can we mathematically formulate the desirable analogical structures and leverage them in our objective functions to improve multi-relational embedding? In this case, can we develop new algorithms for tractable inference for the embedding of very large knowledge graphs?
These questions present a fundamental challenge which has not been addressed by existing work, and answering these questions are the main contributions we aim in this paper. We name this open challenge as the \textit{analogical inference} problem, for the distinction from rule-based \textit{analogical reasoning} in classic AI.

Our specific novel contributions are the following:
\begin{enumerate}
    \item A new framework that, for the first time,
        explicitly models analogical structures in multi-relational embedding,
        and that improves the state-of-the-art performance on benchmark datasets;
    \item The algorithmic solution for conducting analogical inference in a differentiable manner, whose implementation is as scalable as the fastest known relational embedding algorithms;
    \item The theoretical insights on how our framework provides a unified view of several representative methods as its special (and restricted) cases, and why the generalization of such cases lead to the advantageous performance of our method as empirically observed.
\end{enumerate}

The rest of this paper is organized as follows:
\S \ref{sec:linearmaps} introduces related background where multi-relational embedding is formulated as linear maps.
\S \ref{sec:analogies} describes our new optimization framework where the desirable analogical structures are rigorously defined by the the commutative property of linear maps.
\S \ref{sec:algorithm} offers an efficient algorithm for scalable inference by exploiting the special structures of commutative linear maps,
\S \ref{sec:unifiedview} shows how our framework subsumes several representative approaches in a principled way,
and \S \ref{sec:experiments} reports our experimental results,
followed by the concluding remarks in \S \ref{sec:conclusion}.

\section{Related Background}
\label{sec:linearmaps}

\subsection{Notations}
Let $\mathcal{E}$ and $\mathcal{R}$ be the space of all entities and their relations.
A knowledge base $\mathcal{K}$ is a collection of triplets $(s, r, o) \in \mathcal{K}$
where $s \in \mathcal{E}, o \in \mathcal{E}, r \in \mathcal{R}$ stand for the subject, object and their relation, respectively.
Denote by $v \in \mathbb{R}^{|\mathcal{E}| \times m}$ a look-up table where $v_e \in \mathbb{R}^m$ is the vector embedding for entity $e$,
and denote by tensor $W \in \mathbb{R}^{|\mathcal{R}| \times m \times m}$ another look-up table where $W_r \in \mathbb{R}^{m \times m}$ is the matrix embedding for relation $r$.
Both $v$ and $W$ are to be learned from $\mathcal{K}$.

\subsection{Relations as Linear Maps}
We formulate each relation $r$ as a linear map that,
for any given $(s, r, o) \in \mathcal{K}$, transforms the subject $s$ from its original position in the vector space to somewhere near the object $o$.
In other words,
we expect the latent representations for any valid $(s, r, o)$ to satisfy
\begin{equation}
    v_s^\top W_r \approx v_o^\top
    \label{eq:multi}
\end{equation}
The degree of satisfaction in the approximated form of  \eqref{eq:multi} can be quantified using
the inner product of $v_s^\top W_r$ and $v_o$.
That is, we define a bilinear score function as:
\begin{equation}
    \phi(s,r,o) = \langle v_s^\top W_r, v_o \rangle = v_s^\top W_r v_o
\end{equation}
Our goal is to learn $v$ and $W$ such that
$\phi(s, r, o)$ gives high scores to valid triples, and low scores to the invalid ones. 
In contrast to some previous models \cite{bordes2013translating}
where relations are modeled as additive translating operators,
namely $v_s + w_r \approx v_o$,
the multiplicative formulation in \eqref{eq:multi} offers a natural analogy to the first-order logic where each relation is treated as a predicate operator over input arguments (subject and object in our case).  Clearly, the linear transformation defined by a matrix, a.k.a. a linear map, is a richer operator than the additive transformation defined by a vector.
Multiplicative models are also found to substantially outperform additive models
empirically \cite{nickel2011three, DBLP:journals/corr/YangYHGD14a}.

\subsection{Normal Transformations}
Instead of allowing arbitrary linear maps to be used for representing relations, a particular family of matrices has been studied for ``well-behaved'' linear maps.  This family is named as the \textit{normal matrices}.
\begin{definition}[Normal Matrix]
    A real matrix $A$ is normal if and only if $A^\top A = A A^\top$.
\end{definition}

Normal matrices have nice theoretical properties which are often desirable form relational modeling, e.g., they are unitarily diagonalizable and hence can be conveniently analyzed by the spectral theorem \cite{dunford1971linear}.
Representative members of the normal family include:
\begin{itemize}
    \item Symmetric Matrices for which $W_r W_r^\top = W_r^\top W_r= W_r^2 $.  These includes all diagonal matrices and positive semi-definite matrices,
        and the symmetry implies $\phi(s, r, o) = \phi(o, r, s)$.
        They are suitable for modeling symmetric relations such as $is\_identical$.
    \item Skew-/Anti-symmetric Matrices for which $W_rW_r^\top = W_r^\top W_r = -W_r^2$, which implies $\phi(s, r, o) = -\phi(o, r, s)$.
        These matrices are suitable for modeling asymmetric relations
        such as $is\_parent\_of$.
    \item Rotation Matrices for which $W_rW_r^\top  = W_r^\top W_r = I_m$,
        which suggests that the relation $r$ is invertible as $W_r^{-1}$ always exists.
        Rotation matrices are suitable for modeling 1-to-1 relationships (bijections).
    \item Circulant Matrices \cite{gray2006toeplitz},
        which have been implicitly used in recent work on holographic representations \cite{DBLP:conf/aaai/NickelRP16}.
        These matrices are usually related to the learning of latent representations in the Fourier domain
        (see \S \ref{sec:unifiedview} for more details).
\end{itemize}

In the remaining parts of this paper, we denote all the real normal matrices in $\mathbb{R}^{m \times m}$ as $\mathcal{N}_m(\mathbb{R})$.



\section{Proposed Analogical Inference Framework}
\label{sec:analogies}
Analogical reasoning is known to play a central role in human induction about knowledge \cite{gentner1983structure, minsky1988society, holyoak1996mental, hofstadter2001analogy}. Here we provide a mathematical formulation of the analogical structures of interest in multi-relational embedding in a latent semantic space, to support algorithmic inference about the embeddings of entities and relations in a knowledge graph. 

\subsection{Analogical Structures}
Consider the famous example in the word embedding literature \cite{mikolov2013distributed, pennington2014glove},
for the following entities and relations among them: 
\begin{quote}
``$man$ is to $king$ as $woman$ is to $queen$''
\end{quote}
In an abstract notion we denote the entities by $a$ (as $man$) , $b$ (as $king$), $c$ (as $woman$) and $d$ (as $queen$), and the relations by  $r$ (as $crown$)  and $r'$ (as $male \mapsto female$), respectively. These give us the subject-relation-object triplets as follows:
\begin{align}
    a \stackrel{r}{\rightarrow} b, \quad c \stackrel{r}{\rightarrow} d, \quad
    a \stackrel{r'}{\rightarrow} c, \quad b \stackrel{r'}{\rightarrow} d
    \label{eq:abcd}
\end{align}
For multi-relational embeddings,
$r$ and $r'$ are members of $\mathcal{R}$ and are modeled as linear maps in our case.

The relational maps in \eqref{eq:abcd} can be visualized using a commutative diagram \cite{adamek2004abstract, brown2006category} from the Category Theory,
as shown in Figure \ref{fig:abcd},
where each node denotes an entity and each edge denotes a linear map that transforms one entity to the other.
We also refer to such a diagram as a ``parallelogram'' to highlight its particular \emph{algebraic structure}\footnote{Notice that this is different from parallelograms in the geometric sense because each edge here is a linear map instead of the difference between two nodes in the vector space.}.
\begin{figure}[h]
    \centering
    \begin{tikzpicture}
      back line/.style={dashed},
      cross line/.style={preaction={draw=white, -,line width=6pt}}]
      \node (A) {a};
      \node (B) [right of=A, node distance=2cm] {b};
      \node (C) [below of=B, right of=B, node distance=1cm] {d};
      \node (D) [left of=C, node distance=2cm] {c};
      \draw[->] (A) to node [above, sloped] {$r$} (B);
      \draw[->] (B) to node [above] {$r'$} (C);
      \draw[->] (D) to node [above, sloped] {$r$} (C);
      \draw[->] (A) to node [above] {$r'$} (D);
    \end{tikzpicture}
    \caption{Parallelogram diagram for the analogy of ``$a$ is to $b$ as $c$ is to $d$'', where each edge denotes a linear map.}
    \label{fig:abcd}
\end{figure}
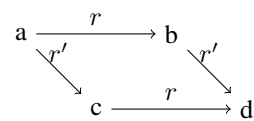

The parallelogram in Figure \ref{fig:abcd} represents a very basic analogical structure
which could be informative for the inference about unknown facts (triplets).
To get a sense about why analogies would help in the inference about unobserved facts, we notice that
for entities $a, b, c, d$ which form an analogical structure in our example,
the parallelogram structure is fully determined by symmetry. This means that if we know $a \stackrel{r}{\rightarrow} b$ and $a \stackrel{r'}{\rightarrow} c$, then we can induce the remaining triplets of $c \stackrel{r}{\rightarrow} d$  and $b \stackrel{r'}{\rightarrow} d$.  In other words, 
understanding the relation between $man$ and $king$ helps us to fill up the  unknown relation between $woman$ and $queen$.


Analogical structures are not limited to parallelograms, of course, 
though parallelograms often serve as the building blocks for more complex analogical structures.
As an example, in Figure \ref{fig:solar} of \S \ref{sec:introduction} we show a compound analogical structure in the form of a triangular prism, for mirroring the correspondent entities/relations between 
the atom system and the solar system.
Formally define the desirable analogical structures in a  computationally tractable objective for optimization is the key for solving our problem, which we will introduce next.  

\subsection{Commutative Constraint for Linear Maps}
Although it is tempting to explore all potentially interesting parallelograms in the modeling of analogical structure, it is computationally intractable to examine the entire powerset of entities as the candidate space of analogical structures.
A more reasonable strategy is to identify some desirable properties of the analogical structures we want to model,  and use those properties as constraints for reducing the candidate space. 

An desirable property of the linear maps we want is 
that all the directed paths with the same starting node and end node form the \textit{compositional equivalence}.
Denoting by ``$\circ$'' the composition operator between two relations,
the parallelogram in Figure \ref{fig:abcd} contains two  equivalent compositions as:
\begin{equation}
    r \circ r' = r' \circ r
    \label{eq:rcommute}
\end{equation}
which means that $a$ is connected to $d$ via either path.
We call this the \textit{commutativity} property of the linear maps, which is a
necessary condition for forming commutative parallelograms and therefore the corresponding analogical structures.
Yet another example is given by Figure \ref{fig:solar},
where $sun$ can traverse to $charge$ along multiple alternative paths of length three,
implying the commutativity of relations $surrounded\_by$, $made\_of$, $scale\_down$.

The composition of two relations (linear maps) is naturally implemented via matrix multiplication \cite{DBLP:journals/corr/YangYHGD14a, guu2015traversing},
hence equation \eqref{eq:rcommute} indicates
\begin{equation}
    W_{r\circ r'} = W_r W_{r'} = W_{r'} W_r
    \label{eq:wcommute}
\end{equation}
One may further require the commutative constraint \eqref{eq:wcommute} to be satisfied for any pair of relations in $\mathcal{R}$ because they may be simultaneously present in the same commutative parallelogram for certain subsets of entities.
In this case,
we say the relations in $\mathcal{R}$ form a commuting family.


It is worth mentioning that $\mathcal{N}_m(\mathbb{R})$ is not closed under matrix multiplication.
As the result,
the composition rule in eq.\ \eqref{eq:wcommute} may not always yield a legal new relation---$W_{r \circ r'}$ may no longer be a normal matrix.
However,
any commuting family in $\mathcal{N}_m(\mathbb{R})$ is indeed closed under multiplication.
This explains the necessity of having a commuting family of relations from an alternative perspective.

\subsection{The Optimization Objective}
The generic goal for multi-relational embedding
is to find entity and relation representations 
such that positive triples labeled as $y=+1$ receive higher score than the negative triples labeled as $y=-1$.
This can be formulated as
\begin{align}
    \min_{v, W} \enskip& \mathbb{E}_{s,r,o,y \sim \mathcal{D}}\enskip \ell\left(\phi_{v, W}( s, r, o ), y\right)
\end{align}
where $\phi_{v, W}(s, r, o) = v_s^\top W_r v_o$ is our score function based on the embeddings,
$\ell$ is our loss function,
and $\mathcal{D}$ is the data distribution constructed based on the training set $\mathcal{K}$.

To impose analogical structures among the representations,
we in addition require the linear maps associated with relations
to form a commuting family of normal matrices.
This gives us the objective function for ANALOGY:
\begin{align}
    \min_{v, W} \enskip &\mathbb{E}_{s,r,o,y \sim \mathcal{D}}\enskip \ell\left(\phi_{v, W}( s, r, o ), y\right) \label{obj-original} \\
    \text{s.t.} \enskip& W_r W_r^\top = W_r^\top W_r \quad \forall r \in \mathcal{R} \label{obj-original-normal} \\
    \enskip & W_rW_{r'} = W_{r'}W_r \quad \forall r, r' \in \mathcal{R} \label{eq:opt-commutative}
\end{align}
where constraints \eqref{obj-original-normal} and \eqref{eq:opt-commutative} are corresponding to the normality and commutativity requirements,
respectively.
Such a constrained optimization may appear to be computationally expensive at the first glance.
In \S \ref{sec:algorithm}, however, we will recast it as a simple lightweight problem
for which each SGD update can be carried out efficiently in $O(m)$ time.

\section{Efficient Inference Algorithm}
\label{sec:algorithm}

The constrained optimization \eqref{obj-original}
is computationally challenging due to the large number of model parameters in tensor $W$,
the matrix normality constraints,
and the quadratic number of pairwise commutative constraints in \eqref{eq:opt-commutative}.

Interestingly,
by exploiting the special properties of commuting normal matrices,
we will show in Corollary \ref{coro:alternative} that ANALOGY can be alternatively solved
via an another formulation of substantially lower complexity.
Our findings are based on the following lemma and theorem:

\begin{lemma} \cite{wilkinson1965algebraic}
    For any real normal matrix $A$,
    there exists a real orthogonal matrix $Q$ and a block-diagonal matrix $B$ such that $A = QBQ^\top$, where each diagonal block of $B$ is either (1) A real scalar, or (2) A 2-dimensional real matrix in the form of
            $
            \begin{bmatrix}
                x & -y \\
                y & x 
            \end{bmatrix}
            $, where both $x$, $y$ are real scalars.

    \label{lem:diagonal}
\end{lemma}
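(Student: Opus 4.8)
The plan is to derive this real block-diagonalization from the complex spectral theorem for normal matrices, together with the fact that a real matrix has a conjugation-symmetric spectrum. Regarding $A \in \mathbb{R}^{m \times m}$ as an element of $\mathbb{C}^{m \times m}$, the normality condition $A^\top A = A A^\top$ becomes $A^* A = A A^*$ (since $A$ is real), so the complex spectral theorem furnishes an orthonormal basis of $\mathbb{C}^m$ consisting of eigenvectors of $A$. Because $A$ has real entries, its characteristic polynomial has real coefficients, hence its eigenvalues occur in conjugate pairs $\lambda, \bar\lambda$; moreover if $z$ is a unit eigenvector for $\lambda$ then $A\bar z = \overline{Az} = \bar\lambda \bar z$, so $\bar z$ is a unit eigenvector for $\bar\lambda$, and conjugation carries an orthonormal eigenbasis of the $\lambda$-eigenspace to one of the $\bar\lambda$-eigenspace. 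We may therefore assume the chosen orthonormal eigenbasis is stable under conjugation.

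Next I would convert the complex eigenvectors into real ones. A real eigenvalue $\lambda$ admits a real unit eigenvector, since its eigenspace is the kernel of the real matrix $A - \lambda I$ and so has a real orthonormal basis; each such vector contributes a $1 \times 1$ block equal to $\lambda$. For a genuinely complex eigenvalue $\lambda = x + iy$ with $y \neq 0$, write a unit eigenvector as $z = \tfrac{1}{\sqrt 2}(u - iw)$ with $u, w \in \mathbb{R}^m$. The key elementary computation is that the Hermitian relations $\langle z, z\rangle = 1$ and $\langle z, \bar z\rangle = 0$, which hold by construction of the eigenbasis, are equivalent to $\|u\| = \|w\| = 1$ and $u^\top w = 0$ in the real inner product. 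Separating $Az = \lambda z$ into real and imaginary parts gives $Au = xu + yw$ and $Aw = -yu + xw$, so on the real orthonormal pair $\{u, w\}$ the map $A$ acts by the matrix $\begin{bmatrix} x & -y \\ y & x \end{bmatrix}$.

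Finally I would assemble the columns of $Q$: one real unit vector per real eigenvalue, and the pair $\{u, w\}$ per conjugate pair of complex eigenvalues (one representative per pair). Using that for a normal matrix $Az = \lambda z$ forces $A^\top z = \bar\lambda z$, eigenvectors attached to distinct eigenvalues are orthogonal, and a short computation shows the associated real vectors are pairwise orthogonal even within a single complex eigenspace; a dimension count shows they span $\mathbb{R}^m$. Stacking them as columns yields a real orthogonal $Q$ with $Q^\top A Q = B$ block-diagonal of the claimed form.

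The main obstacle is the bookkeeping in the complex-eigenvalue case: carefully verifying that Hermitian orthonormality of the conjugate pair $\{z, \bar z\}$ (and, more generally, of all the eigenvectors chosen within one eigenspace) is equivalent to real orthonormality of the corresponding vectors $u, w$ — this is precisely where normality, rather than mere diagonalizability, is essential — and tracking the orientation so that the $2 \times 2$ block comes out as $\begin{bmatrix} x & -y \\ y & x \end{bmatrix}$ rather than its transpose, which can always be arranged by swapping $u \leftrightarrow w$ or $\lambda \leftrightarrow \bar\lambda$. An alternative route that sidesteps some of this is induction on $m$: a normal $A$ has an invariant subspace of dimension $1$ or $2$ (from a real or complex eigenvalue), its orthogonal complement is again $A$-invariant because $A$-invariant subspaces of a normal matrix are $A^\top$-invariant, and the restriction of a normal matrix to such an invariant subspace is again normal, so one peels off blocks one at a time.
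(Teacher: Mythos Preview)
The paper does not actually prove this lemma: it is stated with a citation to \cite{wilkinson1965algebraic} and then used as a black box, with only the one-line gloss that ``any real normal matrix can be block-diagonalized into an almost-diagonal canonical form.'' So there is no paper-side argument to compare against.

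Your proposal is a correct and complete outline of the standard proof. The route via the complex spectral theorem, pairing each non-real eigenvalue $\lambda = x+iy$ with $\bar\lambda$ and passing from a unit eigenvector $z = \tfrac{1}{\sqrt 2}(u - iw)$ to the real orthonormal pair $(u,w)$, is exactly how this result is usually derived; your identification that the conditions $\langle z,z\rangle = 1$ and $\langle z,\bar z\rangle = 0$ translate into $\|u\|=\|w\|=1$, $u^\top w = 0$ is the crux, and your computation $Au = xu+yw$, $Aw = -yu+xw$ gives precisely the required $2\times 2$ block. The bookkeeping you flag for higher-multiplicity complex eigenspaces is real but routine: choosing the eigenbasis to be conjugation-stable and then using both $\langle z_i, z_j\rangle = 0$ and $\langle \bar z_i, z_j\rangle = 0$ forces all cross inner products among the associated real vectors to vanish. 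The inductive alternative you mention is equally valid and is in fact closer in spirit to Wilkinson's own treatment.
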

The lemma suggests any real normal matrix can be block-diagonalized
into an almost-diagonal canonical form.

\begin{theorem}
    [Proof given in the supplementary material]
    If a set of real normal matrices $A_1, A_2, ... $ form a commuting family,
    namely
    $
    A_i A_j = A_j A_i\ \forall i, j$,
    then they can be block-diagonalized by the same real orthogonal basis $Q$.
    \label{thm:commutativity}
\end{theorem}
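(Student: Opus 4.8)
The plan is to view the theorem as the real-field refinement of the classical fact that a commuting family of \emph{complex} normal matrices is simultaneously unitarily diagonalizable: first establish that complex diagonalization, and then descend back to $\mathbb{R}$ by grouping complex-conjugate joint eigenspaces, each of which will turn into a $1\times1$ or $2\times2$ real block of exactly the type in Lemma~\ref{lem:diagonal}.

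\emph{Step 1 (simultaneous diagonalization over $\mathbb{C}$).} Regard each $A_k$ as a matrix in $\mathbb{C}^{m\times m}$; it is still normal and the family still commutes. For a diagonalizable matrix the spectral projection onto any eigenspace is a polynomial in that matrix (Lagrange interpolation on its spectrum), and for a normal matrix $A$ one also has $A^{\ast}=q(A)$ for a polynomial $q$, so these projections are orthogonal projections and distinct eigenspaces of $A$ are mutually orthogonal. Hence, if $V$ is an eigenspace of $A_1$, then every $A_k$ commutes with the (polynomial-in-$A_1$) projection onto $V$, so it maps $V$ into $V$ and $V^{\perp}$ into $V^{\perp}$; since $A_k^{\ast}$ is again a polynomial in $A_k$ it too preserves $V$, so the restrictions $A_k|_V$ and $A_k|_{V^{\perp}}$ are commuting normal families on strictly smaller spaces (unless every $A_k$ is scalar, which is trivial). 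Inducting on $m$ yields a unitary $U$ and an orthogonal decomposition $\mathbb{C}^m=\bigoplus_\chi V_\chi$ into joint eigenspaces, indexed by ``characters'' $\chi$ that assign to each $A_k$ a joint eigenvalue $\chi(A_k)\in\mathbb{C}$, with $A_k$ acting on $V_\chi$ as the scalar $\chi(A_k)$.

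\emph{Step 2 (descent to $\mathbb{R}$).} Complex conjugation on $\mathbb{C}^m$ is anti-unitary and commutes with every real $A_k$, so it carries $V_\chi$ onto $V_{\bar\chi}$, where $\bar\chi(A_k):=\overline{\chi(A_k)}$; in particular $\dim V_\chi=\dim V_{\bar\chi}$. If $\chi=\bar\chi$ (all joint eigenvalues real), then $V_\chi$ is conjugation-stable and hence is the complexification of the real space $V_\chi\cap\mathbb{R}^m$; any orthonormal real basis of the latter gives vectors on which each $A_k$ acts as the real scalar $\chi(A_k)$, i.e.\ $1\times1$ blocks. If $\chi\ne\bar\chi$, handle $V_\chi$ and $V_{\bar\chi}$ jointly: take an orthonormal basis $u_1,\dots,u_d$ of $V_\chi$, so that $\bar u_1,\dots,\bar u_d$ is one of $V_{\bar\chi}$, and note $V_\chi\perp V_{\bar\chi}$ since some $A_k$ has $\chi(A_k)\ne\bar\chi(A_k)$ and distinct eigenspaces of the normal matrix $A_k$ are orthogonal. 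Then the real vectors $q_l:=\sqrt2\,\mathrm{Re}\,u_l$ and $q_l':=-\sqrt2\,\mathrm{Im}\,u_l$ form an orthonormal set spanning a real $2d$-dimensional subspace, and writing $\chi(A_k)=x_k+i y_k$ a short computation from $A_k u_l=\chi(A_k)u_l$ and $A_k\bar u_l=\overline{\chi(A_k)}\,\bar u_l$ gives $A_k q_l=x_k q_l+y_k q_l'$ and $A_k q_l'=-y_k q_l+x_k q_l'$; that is, $A_k$ acts on $\mathrm{span}\{q_l,q_l'\}$ by the $2\times2$ matrix $\begin{bmatrix}x_k&-y_k\\ y_k&x_k\end{bmatrix}$ of Lemma~\ref{lem:diagonal}, and the same real basis does this for every $k$ at once.

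\emph{Step 3 and the main obstacle.} Concatenating, over all real characters and all conjugate pairs of characters, the real orthonormal bases produced above yields a real orthonormal basis of $\mathbb{R}^m$; taking $Q$ to be the orthogonal matrix whose columns are these basis vectors, every $Q^{\top}A_k Q$ is block-diagonal with real $1\times1$ blocks and $2\times2$ blocks of the stated form, which is the assertion, with a single common $Q$. The delicate point is Step~2: one has to verify that the real and imaginary parts arising from \emph{different} conjugate pairs (and from the real characters) are mutually orthonormal and of unit norm, which is precisely where orthogonality of distinct eigenspaces of a normal matrix is invoked, applied to a separating $A_k$ chosen for each pair; Step~1 is the routine spectral-theorem induction and the $2\times2$ block identity in Step~2 is a direct calculation once the bases are in place.
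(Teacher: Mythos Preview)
Your argument is correct and follows the standard route to this result: simultaneous unitary diagonalization of the commuting normal family over $\mathbb{C}$, followed by descent to $\mathbb{R}$ by pairing conjugate joint eigenspaces into real $1\times1$ and $2\times2$ blocks. The paper defers its own proof to supplementary material not included in the provided source, so a line-by-line comparison is not possible; that said, your approach is the canonical one, and each step---the polynomial-in-$A$ nature of spectral projections and of $A^{\ast}$ (ensuring restrictions remain normal), the conjugation-equivariance of the joint-eigenspace decomposition for real matrices, the orthogonality of distinct joint eigenspaces via a separating $A_k$, and the $\sqrt{2}$-scaled real/imaginary-part basis yielding the rotation-type $2\times2$ blocks of Lemma~\ref{lem:diagonal}---is sound.
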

The theorem above implies that the set of dense relational matrices $\{W_r\}_{r \in \mathcal{R}}$,
if mutually commutative,
can always be \emph{simultaneously block-diagonalized}
into another set of sparse almost-diagonal matrices $\{B_r\}_{r \in \mathcal{R}}$.
\begin{corollary}[Alternative formulation for ANALOGY]
    For any given solution $(v^*, W^*)$ of optimization \eqref{obj-original},
    there always exists an alternative set of embeddings $(u^*, B^*)$
    such that $\phi_{v^*, W^*}(s, r, o) \equiv \phi_{u^*, B^*}(s, r, o)$,
    $\forall (s, r, o)$,
    and $(u^*, B^*)$ is given by the solution of:
    \begin{align}
        \min_{u, B} &\enskip \mathbb{E}_{s,r,o,y \sim \mathcal{D}}\enskip \ell\left(\phi_{u, B}(s, r, o), y\right) \label{obj-alternative} \\
        &\enskip B_r \in \mathcal{B}^n_m \quad \forall r \in \mathcal{R} \label{eq:constraint-B}
    \end{align}
    where $\mathcal{B}^n_m$ denotes all $m \times m$ almost-diagonal matrices 
    in Lemma \ref{lem:diagonal} with $n < m$ real scalars on the diagonal.
    \label{coro:alternative}
\end{corollary}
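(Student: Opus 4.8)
The plan is to convert \eqref{obj-original} into \eqref{obj-alternative} by the single orthogonal change of basis guaranteed by Theorem \ref{thm:commutativity}, and then to check that this transformation preserves the loss in both directions, so that minimizers of the two problems correspond to one another.

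First I would take an arbitrary minimizer $(v^*, W^*)$ of \eqref{obj-original}. By feasibility, $\{W^*_r\}_{r\in\mathcal{R}}$ is a commuting family of real normal matrices, so Theorem \ref{thm:commutativity} supplies one real orthogonal $Q$ together with block-diagonal matrices $B^*_r$ in the canonical form of Lemma \ref{lem:diagonal} with $W^*_r = Q B^*_r Q^\top$ for every $r$. Setting the new entity embeddings to $u^*_e := Q^\top v^*_e$ and using $Q^\top Q = I$, we get for every triple $(s,r,o)$
\[
\phi_{u^*,B^*}(s,r,o) = (u^*_s)^\top B^*_r u^*_o = (v^*_s)^\top Q B^*_r Q^\top v^*_o = (v^*_s)^\top W^*_r v^*_o = \phi_{v^*,W^*}(s,r,o),
\]
so the objective value is unchanged (the loss sees the embeddings only through the scalar $\phi$). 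Each $B^*_r$ lies in $\mathcal{B}^n_m$ for the value of $n$ fixed by the common block pattern, so $(u^*,B^*)$ is feasible for \eqref{obj-alternative} and attains there the optimal value of \eqref{obj-original}.

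Next I would establish the matching bound, namely that no feasible point of \eqref{obj-alternative} does better than the optimum of \eqref{obj-original}. Every matrix in $\mathcal{B}^n_m$ is normal --- a $1\times 1$ block is trivially normal, and a $2\times 2$ block $A = \bigl(\begin{smallmatrix} x & -y \\ y & x\end{smallmatrix}\bigr) = xI + yJ$ with $J = \bigl(\begin{smallmatrix} 0 & -1 \\ 1 & 0\end{smallmatrix}\bigr)$ satisfies $A^\top A = AA^\top = (x^2+y^2)I$ --- and any two elements of $\mathcal{B}^n_m$ commute, since they act block-by-block and each $2\times 2$ block multiplies like a complex number. Hence, for any $(u', B')$ feasible for \eqref{obj-alternative}, the pair $(u', B')$ (with $Q = I$) is also feasible for \eqref{obj-original} and yields the same loss. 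Combined with the previous paragraph, the two optimal values coincide and the constructed $(u^*, B^*)$ is a minimizer of \eqref{obj-alternative}, which is exactly the claim.

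The step I expect to require the most care is checking that the $B^*_r$ produced by the simultaneous block-diagonalization can be taken to share one common block pattern (a single $n$): in a commuting family, as soon as some $W^*_r$ genuinely needs a rotation--scaling block ($y\neq 0$) on a coordinate pair, the centralizer of that block forces every matrix commuting with it to take the same $xI + yJ$ form there (with $y$ possibly $0$), while coordinate pairs on which all the matrices are diagonal are simply recorded as scalar blocks; thus the individual patterns are mutually compatible and collapse to a single $n$. Everything else --- the change of basis, the normality and mutual commutativity of $\mathcal{B}^n_m$, and the loss preservation in both directions --- is routine once Theorem \ref{thm:commutativity} is in hand.
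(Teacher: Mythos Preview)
Your proposal is correct and follows essentially the same route as the paper: invoke Theorem~\ref{thm:commutativity} to simultaneously block-diagonalize the commuting normal family via a single orthogonal $Q$, set $u_e = Q^\top v_e$ (the paper writes this as $u = vQ$ on the lookup table), and observe that $\mathcal{B}^n_m$ is itself a commuting normal family so the constraints carry over. Your write-up is in fact more complete than the paper's proof sketch, since you explicitly argue the reverse inequality (feasible points of \eqref{obj-alternative} embed back into \eqref{obj-original} with $Q=I$) and you flag the common-$n$ issue that the paper leaves implicit.
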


\begin{proof}[proof sketch]
    With the commutative constraints,
    there must exist some orthogonal matrix $Q$,
    such that
    $ W_r = QB_rQ^\top $, $B_r \in \mathcal{B}^n_m$, $\forall r \in \mathcal{R}$.
    We can plug-in these expressions into optimization \eqref{obj-original}
    and let $u = vQ$,
    obtaining
\begin{align}
    \phi_{v, W}(s, r, o) =& v_s^\top W_r v_o
    =v_s^\top Q B_r Q^\top v_o \\
    =&u_s^\top B_r u_o = \phi_{u, B}(s, r, o)
\end{align}
In addition, it is not hard to verify that constraints \eqref{obj-original-normal} and \eqref{eq:opt-commutative}
are automatically satisfied by exploiting the facts that $Q$ is orthogonal and $\mathcal{B}_m^n$ is a commutative normal family.
\end{proof}

Constraints \eqref{eq:constraint-B} in the alternative optimization problem
can be handled by simply binding together the coefficients within each of those $2\times2$ blocks in $B_r$.
Note that each $B_r$ consists of only $m$ free parameters,
allowing the gradient w.r.t.\ any given triple to be efficiently evaluated in $O(m)$.

\section{Unified View of Representative Methods}
\label{sec:unifiedview}
In the following we provide a unified view of several embedding models
\cite{DBLP:journals/corr/YangYHGD14a, DBLP:conf/icml/TrouillonWRGB16, DBLP:conf/aaai/NickelRP16},
by showing that they are restricted versions under our framework,
hence are implicitly imposing analogical properties.
This explains their strong empirical performance as compared to other baselines (\S \ref{sec:experiments}).

\subsection{DistMult}
DistMult \cite{DBLP:journals/corr/YangYHGD14a} embeds both entities and relations as
vectors, and defines the score function as
\begin{align}
    \phi(s, r, o) &= \langle v_s, v_r, v_o \rangle \\
    \text{where} \enskip v_s, v_r, v_o &\in \mathbb{R}^m, \forall s, r, o
    \label{eq:score-distmult}
\end{align}
where $\langle \cdot, \cdot, \cdot \rangle$ denotes the generalized inner product.

\begin{proposition}
    DistMult embeddings can be fully recovered by ANALOGY embeddings when $n = m$.
\end{proposition}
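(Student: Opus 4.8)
The plan is to show that imposing $n=m$ collapses the ANALOGY score function onto the DistMult score function, so that every DistMult model is literally a feasible point of ANALOGY. First I would unpack the definition of $\mathcal{B}^n_m$ in the boundary case $n=m$: by Lemma~\ref{lem:diagonal} the canonical form of a normal matrix consists of $1\times1$ real scalar blocks together with $2\times2$ rotation-type blocks, and fixing $n=m$ leaves no room for any of the $2\times2$ blocks, so every $B_r\in\mathcal{B}^m_m$ is an honest diagonal matrix $B_r=\diag(b_r)$ with $b_r\in\mathbb{R}^m$. Then the ANALOGY score specialises to
\begin{equation}
\phi_{u,B}(s,r,o)=u_s^\top\diag(b_r)\,u_o=\sum_{i=1}^m (u_s)_i\,(b_r)_i\,(u_o)_i=\langle u_s,b_r,u_o\rangle ,
\end{equation}
which is exactly the DistMult form in \eqref{eq:score-distmult}, with $b_r$ in the role of the relation vector $v_r$.

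Next I would exhibit the explicit reduction realising the ``recovery'' claim. Given an arbitrary DistMult model with entity vectors $\{v_e\}_{e\in\mathcal{E}}$ and relation vectors $\{v_r\}_{r\in\mathcal{R}}$, set $u_e:=v_e$ for every entity and $B_r:=\diag(v_r)$ for every relation. By the computation above the two score functions agree on every triple $(s,r,o)$, hence the loss and the objective \eqref{obj-alternative} take identical values at this point. For feasibility it suffices to note that each $B_r=\diag(v_r)$ lies in $\mathcal{B}^m_m$ by construction, and diagonal matrices are symmetric (hence normal) and commute pairwise, so constraint \eqref{eq:constraint-B}---equivalently the normality and commutativity constraints \eqref{obj-original-normal}--\eqref{eq:opt-commutative} of the original formulation with the shared orthogonal basis $Q$ taken to be $I_m$---is automatically satisfied. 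Therefore the given DistMult solution is recovered exactly as an ANALOGY solution with $n=m$.

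I do not expect a genuine obstacle here; the statement is essentially an unfolding of the definitions of the two score functions once the $2\times2$ blocks are ruled out. The only point worth a line of care is confirming that a plain diagonal matrix is a legitimate member of the canonical family $\mathcal{B}^m_m$, i.e.\ the degenerate instance of Lemma~\ref{lem:diagonal} with all $m$ blocks being $1\times1$ real scalars; and, if one wishes to record the converse inclusion as well, observing that every feasible $B_r$ at $n=m$ is diagonal, so ANALOGY with $n=m$ coincides with DistMult rather than merely containing it.
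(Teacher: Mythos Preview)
Your proposal is correct and follows the same approach as the paper: set $B_r=\diag(v_r)$ so that the ANALOGY bilinear form $u_s^\top B_r u_o$ coincides with DistMult's $\langle v_s,v_r,v_o\rangle$. The paper's own proof is a one-liner to this effect; your additional remarks on why $n=m$ forces $B_r$ to be diagonal and why diagonal matrices satisfy the normality/commutativity constraints are accurate elaborations of the same argument.
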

\begin{proof}
    This is trivial to verify as the score function \eqref{eq:score-distmult} can be rewritten as $\phi(s, r, o) = v_s^\top B_r v_o$
    where $B_r$ is a diagonal matrix given by $B_r = \mathrm{diag}(v_r)$.
\end{proof}
Entity analogies are encouraged in DistMult as the diagonal matrices $\mathrm{diag}(v_r)$'s are both normal and mutually commutative.
However, DistMult is restricted to model symmetric relations only, since $\phi(s, r, o) \equiv \phi(o, r, s)$.

\subsection{Complex Embeddings (ComplEx)}
ComplEx \cite{DBLP:conf/icml/TrouillonWRGB16} extends the embeddings to the complex domain $\mathbb{C}$,
which defines
\begin{align}
    \phi(s, r, o) &= \Re\left( \langle v_s, v_r, \overline{v_o} \rangle \right) \label{eq:complex} \\
    \text{where} \enskip v_s, v_r, v_o &\in \mathbb{C}^m, \forall s, r, o
\end{align}
where $\overline{x}$ denotes the complex conjugate of $x$.

\begin{proposition}
    ComplEx embeddings of embedding size $m$ can be fully recovered by ANALOGY embeddings
    of embedding size $2m$ when $n = 0$.
\end{proposition}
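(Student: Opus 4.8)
The plan is to build an explicit, dimension-doubling embedding of ComplEx into ANALOGY with $n=0$, using the classical ring isomorphism between $\mathbb{C}$ and the set of $2\times 2$ real matrices with equal diagonal entries and opposite off-diagonal entries --- i.e., precisely the block shape of Lemma~\ref{lem:diagonal}, which is the only shape allowed in $\mathcal{B}_{2m}^0$. First I would introduce the map $\iota\colon\mathbb{C}\to\mathbb{R}^{2\times 2}$ sending $x+yi$ to that matrix, together with its companion coordinate map $\jmath\colon\mathbb{C}\to\mathbb{R}^2$ sending $x+yi$ to $(x,y)^\top$, and record the three elementary facts I will use: (i) $\iota$ is a ring homomorphism, so $\iota(zw)=\iota(z)\iota(w)$, and since $\iota(z)^\top=\iota(\bar z)$ and $\iota(z)\iota(\bar z)=|z|^2I$ its image is a commuting family of normal matrices; (ii) $\iota(z)\jmath(w)=\jmath(zw)$, i.e., $\iota$ acts on coordinate vectors as complex multiplication; and (iii) $\jmath(w_1)^\top\jmath(w_2)=\Re(\overline{w_1}w_2)$. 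Since the ComplEx score decomposes coordinatewise as $\phi(s,r,o)=\sum_{k=1}^m\Re\bigl(v_{s,k}v_{r,k}\overline{v_{o,k}}\bigr)$, these three facts suffice.

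Given a ComplEx model with entity vectors $v_e\in\mathbb{C}^m$ and relation vectors $v_r\in\mathbb{C}^m$, I would construct the ANALOGY model of embedding size $2m$ as follows. For each entity $e$, let $u_e\in\mathbb{R}^{2m}$ be the concatenation $\bigl(\jmath(\overline{v_{e,1}}),\dots,\jmath(\overline{v_{e,m}})\bigr)$ --- note the complex conjugate, which is what lets a single vector represent $e$ whether $e$ occurs as a subject or as an object. For each relation $r$, let $B_r\in\mathbb{R}^{2m\times2m}$ be block-diagonal with $k$-th block $\iota(v_{r,k})$. By construction $B_r\in\mathcal{B}_{2m}^0$; and because the $m$ diagonal blocks all lie in $\iota(\mathbb{C})$, each $B_r$ is normal and any two of them commute blockwise, so $\{B_r\}_{r\in\mathcal{R}}$ is a commuting family of normal matrices. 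Hence $(u,B)$ is a feasible point of the ANALOGY problem with $n=0$ (equivalently, it satisfies \eqref{obj-original-normal}--\eqref{eq:opt-commutative}).

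It then remains to check that the two score functions agree. Writing $u_s=(u_s^{(1)},\dots,u_s^{(m)})$ and $u_o=(u_o^{(1)},\dots,u_o^{(m)})$ in block form, I would compute
\begin{align*}
\phi_{u,B}(s,r,o)
&= u_s^\top B_r u_o
 = \sum_{k=1}^m \bigl(u_s^{(k)}\bigr)^\top \iota(v_{r,k})\,u_o^{(k)} \\
&= \sum_{k=1}^m \jmath(\overline{v_{s,k}})^\top \jmath\bigl(v_{r,k}\overline{v_{o,k}}\bigr)
 = \sum_{k=1}^m \Re\bigl(v_{s,k}v_{r,k}\overline{v_{o,k}}\bigr),
\end{align*}
where the third equality uses fact (ii) to move $\iota(v_{r,k})$ inside $\jmath$, and the last uses fact (iii) with $w_1=\overline{v_{s,k}}$ (so that $\overline{w_1}=v_{s,k}$). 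The right-hand side is exactly $\Re(\langle v_s,v_r,\overline{v_o}\rangle)=\phi_{\mathrm{ComplEx}}(s,r,o)$, which establishes the claimed recovery.

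The only step needing genuine care --- conceptual rather than technical --- is the placement of the conjugation in the second paragraph: ComplEx is asymmetric precisely because of the $\overline{v_o}$ term, and one must conjugate on the \emph{entity} table (storing $\jmath(\overline{v_{e,\cdot}})$) rather than on the relation table, so that fact (iii) aligns the bilinear form $u_s^\top B_r u_o$ with the target $\Re(v_s v_r \overline{v_o})$ simultaneously for the subject and object roles of every entity. Everything else is a direct consequence of $\iota$ being a ring homomorphism.
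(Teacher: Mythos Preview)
Your proof is correct and takes essentially the same approach as the paper: both rewrite the ComplEx score via the standard $2\times 2$ real-matrix representation of $\mathbb{C}$, obtaining a bilinear form $u_s^\top B_r u_o$ with block-diagonal $B_r\in\mathcal{B}_{2m}^0$ whose $k$-th block is the rotation--scaling matrix $\bigl(\begin{smallmatrix}\Re(v_r)_k & -\Im(v_r)_k\\ \Im(v_r)_k & \Re(v_r)_k\end{smallmatrix}\bigr)$. The only cosmetic difference is how the subject/object asymmetry is absorbed: you conjugate the entity table ($u_e=\jmath(\overline{v_e})$), whereas the paper instead interleaves imaginary before real ($(v')_{2k-1}=\Im(v)_k$, $(v')_{2k}=\Re(v)_k$), which achieves the same alignment.
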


\begin{proof}
    Let $\Re(x)$ and $\Im(x)$ be the real and imaginary parts of any complex vector $x$. We recast $\phi$ in \eqref{eq:complex} as
\begin{align}
    \phi(r, s, o) 
    = &+ \big\langle \Re(v_r), \Re(v_s), \Re(v_o) \big\rangle \\
    &+ \big\langle \Re(v_r), \Im(v_s), \Im(v_o) \big\rangle \\
    &+ \big\langle \Im(v_r), \Re(v_s), \Im(v_o) \big\rangle \\
    &- \big\langle \Im(v_r), \Im(v_s), \Re(v_o) \big\rangle
    = {v_s'}^\top B_r v_o'
\end{align}
The last equality is obtained via a change of variables:
For any complex entity embedding $v\in \mathbb{C}^{m}$,
we define a new real embedding $v' \in \mathbb{R}^{2m}$ such that
\begin{align}
    \begin{cases}
        (v')_{2k} &= \Re(v)_k \\
        (v')_{2k-1} &= \Im(v)_k
    \end{cases} \quad \forall k = 1,2,\dots m
\end{align}
The corresponding $B_r$ is a block-diagonal matrix in $\mathcal{B}^0_{2m}$ with its $k$-th block given by
$
    \begin{bmatrix}
        \Re(v_r)_k & - \Im(v_r)_k \\
        \Im(v_r)_k & \Re(v_r)_k \\
    \end{bmatrix}
    $.
\end{proof}


\subsection{Holographic Embeddings (HolE)}
HolE \cite{DBLP:conf/aaai/NickelRP16} defines the score function as
\begin{align}
    \phi(s, r, o) &= \langle v_r, v_s \ast v_o \rangle \\
    \text{where} \enskip v_s, v_r, v_o &\in \mathbb{R}^m, \forall s, r, o
    \label{eq:hole}
\end{align}
where the association of $s$ and $o$ is implemented via circular correlation denoted by $\ast$.
This formulation is motivated by the holographic reduced representation \cite{plate2003holographic}.

To relate HolE with ANALOGY,
we rewrite \eqref{eq:hole} in a bilinear form with a circulant matrix $C(v_r)$ in the middle 
\begin{align}
    \phi(r, s, o)
    &= v_s^\top 
    C(v_r)
    v_o
\end{align}
where entries of a circulant matrix are defined as
\begin{equation}
    C(x) = 
    \begin{bmatrix}
        x_1 & x_m & \cdots & x_3 & x_2 \\
        x_2 & x_1 & x_m & & x_3 \\
        \vdots & x_2 & x_1 & \ddots & \vdots \\
        x_{m-1} & & \ddots & \ddots & x_m \\
        x_{m} & x_{m-1} & \cdots & x_2 & x_1 \\
    \end{bmatrix}
\end{equation}
It is not hard to verify that circulant matrices are normal
and commute \cite{gray2006toeplitz},
hence entity analogies are encouraged in HolE,
for which optimization \eqref{obj-original} reduces to an unconstrained problem
as equalities \eqref{obj-original-normal} and \eqref{eq:opt-commutative} are automatically satisfied when all $W_r$'s are circulant.

The next proposition
further reveals that HolE is equivalent to ComplEx with minor relaxation.
\begin{proposition}
    HolE embeddings can be equivalently obtained using the following score function 
    \begin{align}
        \phi(s, r, o) &= \Re\left( \langle v_s, v_r, \overline{v_o} \rangle \right) \\
        \text{where} \enskip v_s, v_r, v_o &\in \fft(\mathbb{R}^m), \forall s, r, o
    \end{align}
    where $\fft(\mathbb{R}^m)$ denotes the image of $\mathbb{R}^m$ in $\mathbb{C}^m$ through the Discrete Fourier Transform (DFT).
    In particular, the above reduces to ComplEx by relaxing $\fft(\mathbb{R}^m)$ to $\mathbb{C}^m$.
\end{proposition}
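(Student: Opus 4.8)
The plan is to use two standard facts about the discrete Fourier transform: it simultaneously diagonalizes every circulant matrix, and it intertwines circular correlation with (conjugated) elementwise multiplication. Concretely, let $F$ be the unitary DFT matrix on $\mathbb{C}^m$, so that $\fft(x) = Fx$ after the usual normalization, and recall the bilinear rewriting established just above, $\phi(s,r,o) = v_s^\top C(v_r)\,v_o$, which is a real number because $v_s,v_r,v_o$ are real. The columns of $F^*$ are the common eigenvectors of all circulant matrices, and the eigenvalues of $C(v_r)$ are precisely its Fourier coefficients, so (up to the chosen sign convention for the transform) $C(v_r) = F^*\,\diag(\fft(v_r))\,F$. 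Substituting this into the score and writing $\tilde x := \fft(x)$ gives $\phi(s,r,o) = c\sum_k \overline{(\tilde v_s)_k}\,(\tilde v_r)_k\,(\tilde v_o)_k$ for a fixed positive scalar $c$ depending only on $m$.

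Next I would eliminate the conjugate sitting on $\tilde v_s$ by exploiting conjugate symmetry. Since $v_s,v_r,v_o\in\mathbb{R}^m$, each of $\tilde v_s,\tilde v_r,\tilde v_o$ lies in $\fft(\mathbb{R}^m)$, which as a subset of $\mathbb{C}^m$ is exactly the $\mathbb{R}$-linear subspace of conjugate-symmetric vectors ($\overline{x_k} = x_{m-k}$), of real dimension $m$. Reindexing the sum by $k\mapsto m-k$ and applying conjugate symmetry to all three factors turns $\overline{(\tilde v_s)_k}(\tilde v_r)_k(\tilde v_o)_k$ into $(\tilde v_s)_k(\tilde v_r)_k\overline{(\tilde v_o)_k}$, so that $\phi(s,r,o) = c\,\langle \tilde v_s,\tilde v_r,\overline{\tilde v_o}\rangle$. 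Folding the constant $c$ into one of the look-up tables (e.g.\ rescaling every relation embedding) and noting that $\phi$ is real, hence invariant under $\Re(\cdot)$, yields the claimed reformulation $\phi(s,r,o)=\Re\!\big(\langle v_s,v_r,\overline{v_o}\rangle\big)$ with all embeddings constrained to $\fft(\mathbb{R}^m)$; "equivalently obtained" holds literally because $\fft$ is a linear bijection from $\mathbb{R}^m$ onto $\fft(\mathbb{R}^m)$, so any HolE solution maps to one of this form and vice versa. The last sentence of the proposition is then immediate: dropping the conjugate-symmetry constraint, i.e.\ replacing $\fft(\mathbb{R}^m)$ by all of $\mathbb{C}^m$, reproduces verbatim the ComplEx score function \eqref{eq:complex}.

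I expect the main obstacle to be bookkeeping rather than conceptual: pinning down the normalization and the sign/conjugation conventions of the DFT and of the circular-correlation operator $\ast$ so that the eigenvalues of $C(v_r)$ come out as $\fft(v_r)$ itself (not its conjugate or its reversal), and so that the reindexing step lands the surviving conjugate on $v_o$ rather than on $v_s$. A secondary point to state carefully is the set-theoretic description of $\fft(\mathbb{R}^m)$ as the conjugate-symmetric sequences — a real-$m$-dimensional real subspace of $\mathbb{C}^m\cong\mathbb{R}^{2m}$ — since this is exactly what makes ComplEx a strict relaxation and is consistent with the "embedding size $2m$" accounting in the preceding proposition on ComplEx.
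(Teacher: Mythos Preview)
Your approach is essentially the paper's: rewrite the HolE score via the DFT diagonalization $C(v_r)=F^{-1}\diag(Fv_r)F$, land on $\phi=\tfrac{1}{m}\langle\overline{\fft(v_s)},\fft(v_r),\fft(v_o)\rangle$, then repackage the factors so the expression matches $\Re\big(\langle v_s',v_r',\overline{v_o'}\rangle\big)$ with $v_s',v_r',v_o'\in\fft(\mathbb{R}^m)$. The paper does this last step by a direct change of variables, setting $v_s'=\overline{\fft(v_s)}$, $v_o'=\overline{\fft(v_o)}$, $v_r'=\tfrac{1}{m}\fft(v_r)$, and silently using that complex conjugation maps $\fft(\mathbb{R}^m)$ to itself (equivalently, your conjugate-symmetry observation).

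There is one bookkeeping slip in your reindexing step. Sending $k\mapsto m-k$ and applying $\overline{x_k}=x_{m-k}$ to all three factors turns $\overline{(\tilde v_s)_k}\,(\tilde v_r)_k\,(\tilde v_o)_k$ into $(\tilde v_s)_k\,\overline{(\tilde v_r)_k}\,\overline{(\tilde v_o)_k}$, not $(\tilde v_s)_k\,(\tilde v_r)_k\,\overline{(\tilde v_o)_k}$: a conjugate lands on the relation factor as well. This is harmless---absorb it together with the scalar $c$ by declaring the new relation embedding to be $c\,\overline{\fft(v_r)}$, which still lies in $\fft(\mathbb{R}^m)$---but as written the identity you claim is false, and your anticipated ``bookkeeping obstacle'' is exactly this extra conjugate. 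Once patched, your argument and the paper's coincide.
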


\begin{proof}
    Let $\fft$ be the DFT operator
    defined by $\fft(x) = Fx$ where $F \in \mathbb{C}^{m \times m}$ is called the Fourier basis of DFT.
    A well-known property for circulant matrices is that any $C(x)$ can always be diagonalized by $F$,
    and its eigenvalues are given by $Fx$ \cite{gray2006toeplitz}.

    Hence the score function can be further recast as
    \begin{align}
        \phi(r, s, o)
        &= v_s^\top F^{-1} \diag(F v_r) F v_o \\
        &= \frac{1}{m} \overline{(F v_s)}^\top \diag(F v_r) (F v_o) \\
        &= \frac{1}{m} \langle \overline{\fft(v_s)}, \fft(v_r), \fft(v_o) \rangle \\
        &= \Re \left[ \frac{1}{m} \langle \overline{\fft(v_s)}, \fft(v_r), \fft(v_o) \rangle \right]
    \end{align}
    Let $v_s' = \overline{\fft(v_s)}, v_o' = \overline{\fft(v_o)}$ and $v_r' = \frac{1}{m} \fft(v_r)$,
    we obtain exactly the same score function as used in ComplEx
    \begin{equation}
        \phi(s, r, o) = \Re \left(\langle v_s', v_r', \overline{v_o'} \rangle\right)
        \label{eq:complex-prime}
    \end{equation}
    \eqref{eq:complex-prime} is equivalent to \eqref{eq:complex}
    apart from an additional constraint that $v_s', v_r', v_o'$ are
    the image of $\mathbb{R}$ in the Fourier domain.
\end{proof}

\section{Experiments}
\label{sec:experiments}

\subsection{Datasets}
We evaluate ANALOGY and the baselines over two benchmark datasets for multi-relational embedding released by previous work \cite{bordes2013translating},
namely a subset of Freebase (FB15K) for generic facts and WordNet (WN18) for lexical relationships between words.

The dataset statistics are summarized in Table \ref{tab:datasets}.

\begin{table}[h]
    \centering
    \begin{tabularx}{\linewidth}{cccccX}
        \toprule
        Dataset & $|\mathcal{E}|$ & $|\mathcal{R}|$ & \#train & \#valid & \#test \\
        \midrule
        FB15K & 14,951 & 1,345 & 483,142 & 50,000 & 59,071 \\
        WN18 & 40,943 & 18 & 141,442 & 5,000 & 5,000 \\
        \bottomrule
    \end{tabularx}
    \caption{Dataset statistics for FB15K and WN18.}
    \label{tab:datasets}
\end{table}

\subsection{Baselines}
\label{sec:baselines}
We compare the performance of ANALOGY against a variety types of multi-relational embedding models
developed in recent years. Those models can be categorized as: 
\begin{itemize}
    \item Translation-based models
        where relations are modeled as translation operators in the embedding space,
        including TransE \cite{bordes2013translating} and its variants TransH \cite{wang2014knowledge}, TransR \cite{lin2015learning}, TransD \cite{ji2015knowledge},
        STransE \cite{nguyen2016stranse} and RTransE \cite{garcia2015composing}.
    \item Multi-relational latent factor models including LFM \cite{jenatton2012latent}
        and RESCAL \cite{nickel2011three} based collective matrix factorization.
    \item Models involving neural network components such as neural tensor networks \cite{socher2013reasoning} and PTransE-RNN \cite{lin2015learning},
        where RNN stands for recurrent neural networks.
    \item Pathwise models including three different variants of PTransE \cite{lin2015modeling} which extend TransE by explicitly taking into account
        indirect connections (relational paths) between entities.
    \item Models subsumed under our proposed framework (\S \ref{sec:unifiedview}),
        including DistMult \cite{DBLP:journals/corr/YangYHGD14a} based simple multiplicative interactions, ComplEx \cite{DBLP:conf/icml/TrouillonWRGB16} using complex coefficients
        and HolE \cite{DBLP:conf/aaai/NickelRP16} based on holographic representations.
        Those models are implicitly leveraging analogical structures per our previous analysis.
    \item Models enhanced by external side information.
        We use Node+LinkFeat (NLF) \cite{toutanova2015observed} as a representative example,
        which leverages textual mentions derived from the ClueWeb corpus.
\end{itemize}


\subsection{Evaluation Metrics}

Following the literature of multi-relational embedding, we use the conventional metrics of Hits@k and Mean Reciprocal Rank (MRR) which evaluate each system-produced ranked list for each test instance and average the scores over all ranked lists for the entire test set of instances.

The two metrics would be flawed for the \textit{negative instances} created in the test phase as a ranked list may contain some positive instances in the training and validation sets \cite{bordes2013translating}.
A recommended remedy, which we followed, is to remove all training- and validation-set triples from all ranked lists during testing. 
We use ``filt.''\ and ``raw'' to indicate the evaluation metrics with or without filtering, respectively.

In the first set of our experiments,
we used on Hits@k with k=10,
which has been reported for most methods in the literature.  We also provide additional results of ANALOGY and a subset of representative baseline methods using MRR, Hits@1 and Hits@3, to enable the comparison with the methods whose published results are in those metrics.


\subsection{Implementation Details}
\subsubsection{Loss Function}
We use the logistic loss for ANALOGY throughout all experiments, namely
$\ell(\phi(s, r, o), y) = - \log \sigma (y \phi(s, r, o))$,
where $\sigma$ is the sigmoid activation function.
We empirically found this simple loss function to perform reasonably well as compared
to more sophisticated ranking loss functions.

\subsubsection{Asynchronous AdaGrad}
Our C++ implementation\footnote{Code available at \href{https://github.com/quark0/ANALOGY}{https://github.com/quark0/ANALOGY}.} runs over a CPU, as ANALOGY only requires lightweight linear algebra routines.
We use asynchronous stochastic gradient descent (SGD) for optimization,
where the gradients with respect to different mini-batches are simultaneously evaluated in multiple threads,
and the gradient updates for the shared model parameters are carried out without synchronization.
Asynchronous SGD is highly efficient,
and causes little performance drop when parameters associated with different mini-batches are mutually disjoint with a high probability
\cite{recht2011hogwild}.
We adapt the learning rate based on historical gradients
using AdaGrad \cite{duchi2011adaptive}.

\subsubsection{Creation of Negative Samples}
Since only valid triples (positive instances) are explicitly given in the training set,
invalid triples (negative instances) need to be artificially created.
Specifically,
for every positive example $(s, r, o)$,
we generate three negative instances $(s', r, o)$, $(s, r', o)$, $(s, r, o')$ by corrupting $s$, $r$, $o$ with random entities/relations $s' \in \mathcal{E}$, $r' \in \mathcal{R}$, $o' \in \mathcal{E}$.
The union of all positive and negative instances
defines our data distribution $\mathcal{D}$ for SGD updates.

\subsubsection{Model Selection}
We conducted a grid search to find the hyperparameters of ANALOGY which maximize the filtered MRR on the validation set,
by enumerating all combinations of the embedding size $m \in \{100, 150, 200\}$,
$\ell_2$ weight decay factor $\lambda \in \{10^{-1}, 10^{-2}, 10^{-3}\}$ of model coefficients $v$ and $W$,
and the ratio of negative over positive samples $\alpha \in \{3, 6\}$.
The resulting hyperparameters for the WN18 dataset are $m = 200, \lambda = 10^{-2}, \alpha = 3$,
and those for the FB15K dataset are $m = 200, \lambda = 10^{-3}, \alpha = 6$.
The number of scalars on the diagonal of each $B_r$ is always set to be $\frac{m}{2}$.
We set the initial learning rate to be $0.1$ for both datasets and adjust it using AdaGrad during optimization.
All models are trained for 500 epochs.

\subsection{Results}

\begin{table}[t]
    \centering
    \caption{Hits@10 (filt.) of all models on WN18 and FB15K categories into three groups: (i) 19 baselines without modeling analogies; (ii) 3 baselines and our proposed ANALOGY which implicitly or explicitly enforce analogical properties over the induced embeddings (see \S \ref{sec:unifiedview}); (iii) One baseline relying on large external data resources in addition to the provided training set.}
    \begin{tabular}{@{}l|cc@{}}
        \toprule
        Models                & WN18          & FB15K       \\ \midrule
        Unstructured \cite{bordes2013translating}     & 38.2          & 6.3         \\
        RESCAL \cite{nickel2011three}               & 52.8          & 44.1        \\
        NTN \cite{socher2013reasoning}              & 66.1          & 41.4        \\
        SME \cite{bordes2012joint}                  & 74.1          & 41.3        \\
        SE \cite{bordes2011learning}                    & 80.5          & 39.8        \\
        LFM \cite{jenatton2012latent}                  & 81.6          & 33.1        \\
        TransH \cite{wang2014knowledge}               & 86.7          & 64.4        \\
        TransE \cite{bordes2013translating}               & 89.2          & 47.1        \\
        TransR \cite{lin2015learning}               & 92.0            & 68.7        \\
        TKRL  \cite{xie2016representation} & -- & 73.4 \\
        RTransE \cite{garcia2015composing}              & --            & 76.2        \\
        TransD \cite{ji2015knowledge}               & 92.2          & 77.3        \\
        CTransR \cite{lin2015learning}              & 92.3          & 70.2        \\
        KG2E  \cite{he2015learning} & 93.2 & 74.0 \\
        STransE \cite{nguyen2016stranse}              & 93.4          & 79.7        \\
        DistMult \cite{DBLP:journals/corr/YangYHGD14a}    & 93.6          & 82.4        \\
        TransSparse \cite{DBLP:conf/aaai/JiLH016}              & 93.9          & 78.3        \\
        PTransE-MUL \cite{lin2015modeling}	& --            & 77.7        \\
        PTransE-RNN \cite{lin2015modeling}  & --            & 82.2        \\
        PTransE-ADD \cite{lin2015modeling}  & --            & 84.6        \\
        \pbox{4cm}{NLF (with external corpus) \\ \cite{toutanova2015observed}}         & \textit{94.3} & \textit{87.0} \\
        ComplEx \cite{DBLP:conf/icml/TrouillonWRGB16}              & \textbf{94.7}          & 84.0        \\
        HolE   \cite{DBLP:conf/aaai/NickelRP16}               & \textbf{94.9}          & 73.9        \\
        \midrule
        Our ANALOGY               &     \textbf{94.7}      & \textbf{85.4}   \\ 
        \bottomrule
    \end{tabular}
    \label{tab:results-main}
\end{table}

\begin{table*}[t]
    \centering
    \caption{MRR and Hits@\{1,3\} of a subset of representative models on WN18 and FB15K. The performance scores of TransE and REACAL are cf. the results published in \cite{DBLP:conf/icml/TrouillonWRGB16} and \cite{DBLP:conf/aaai/NickelRP16}, respectively.}
    \begin{tabular}{@{}l|cccc|cccc@{}}
        \toprule
        & \multicolumn{4}{c}{WN18}                                      & \multicolumn{4}{c}{FB15}                                      \\ \midrule
        Models        & \pbox{1.0cm}{MRR \\ (filt.)}          & \pbox{1.0cm}{MRR \\ (raw)}      & \pbox{1.0cm}{Hits@1 \\ (filt.)}       & \pbox{1.0cm}{Hits@3 \\ (filt.)}       & \pbox{1.0cm}{MRR \\ (filt.)}          & \pbox{1.0cm}{MRR \\ (raw)}      & \pbox{1.0cm}{Hits@1 \\ (filt.)}        & \pbox{1.0cm}{Hits@3 \\ (filt.)}     \\ \midrule
        RESCAL  \cite{nickel2011three}      & 89.0          & 60.3          & 84.2          & 90.4          & 35.4          & 18.9          & 23.5          & 40.9          \\
        TransE   \cite{bordes2013translating}     & 45.4          & 33.5          & 8.9           & 82.3          & 38.0          & 22.1          & 23.1          & 47.2          \\ 
        DistMult  \cite{DBLP:journals/corr/YangYHGD14a}    & 82.2          & 53.2          & 72.8          & 91.4          & 65.4          & 24.2          & 54.6          & 73.3          \\
        HolE  \cite{DBLP:conf/aaai/NickelRP16}        & 93.8          & 61.6          & \textbf{93.0}          & \textbf{94.5} & 52.4          & 23.2          & 40.2          & 61.3          \\ 
        ComplEx    \cite{DBLP:conf/icml/TrouillonWRGB16}   & 94.1          & 58.7          & \textbf{93.6}          & \textbf{94.5} & 69.2          & 24.2          & 59.9          & 75.9          \\
        \midrule
        Our ANALOGY & \textbf{94.2} & \textbf{65.7} & \textbf{93.9}          & \textbf{94.4}          & \textbf{72.5} & \textbf{25.3} & \textbf{64.6} & \textbf{78.5} \\ \bottomrule
    \end{tabular}
    \label{tab:results-representative}
\end{table*}

Table \ref{tab:results-main} compares the Hits@10 score of ANALOGY with that of 23 competing methods using the published scores for these methods in the literature on the WN18 and FB15K datasets.  For the methods not having both scores, the missing slots are indicated by ``--''. 
The best score on each dataset is marked in the bold face; if the differences among the top second or third scores are not statistically significant from the top one, then these scores are also bold faced.  We used one-sample proportion test \cite{yang1999re} at the 5\% p-value level for testing the statistical significances\footnote{Notice that proportion tests only apply to performance scores as proportions, including Hits@k, but are not applicable to non-proportional scores such as MRR. Hence we only conducted the proportion tests on the Hits@k scores.}. 

Table \ref{tab:results-representative} compares the methods (including ours) whose results in additional metrics are available.
The usage of the bold faces is the same as those in Table \ref{tab:results-main}.

In both tables, ANALOGY performs either the best or the 2nd best which is in the equivalent class with the best score in each case according statistical significance test. Specifically, on the harder FB15K dataset in Table \ref{tab:results-main}, which has a very large number of relations, our model outperforms all baseline methods. These results provide a good evidence for the effective modeling of analogical structures in our approach.
We are pleased to see in Table \ref{tab:results-representative} that ANALOGY outperforms DistMult, ComplEx and HolE in all the metrics, as the latter three can be viewed as more constrained versions of our method (as discussed in (\S \ref{sec:unifiedview})). Furthermore, our assertion on HolE for being a special case of ComplEx (\S \ref{sec:unifiedview}) is justified in the same table by the fact that the performance of HolE is dominated by ComplEx. 

\begin{figure}
\centering
	\includegraphics[width=0.475\linewidth]{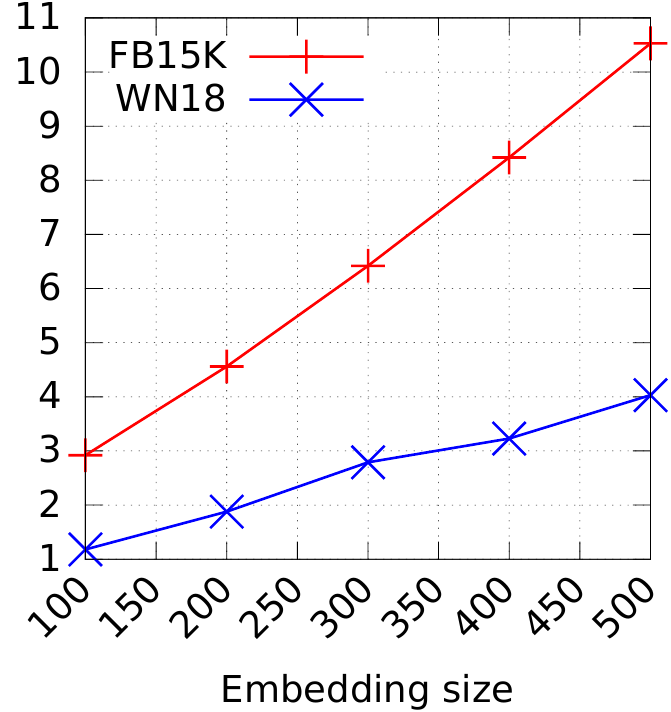}
    \includegraphics[width=0.475\linewidth]{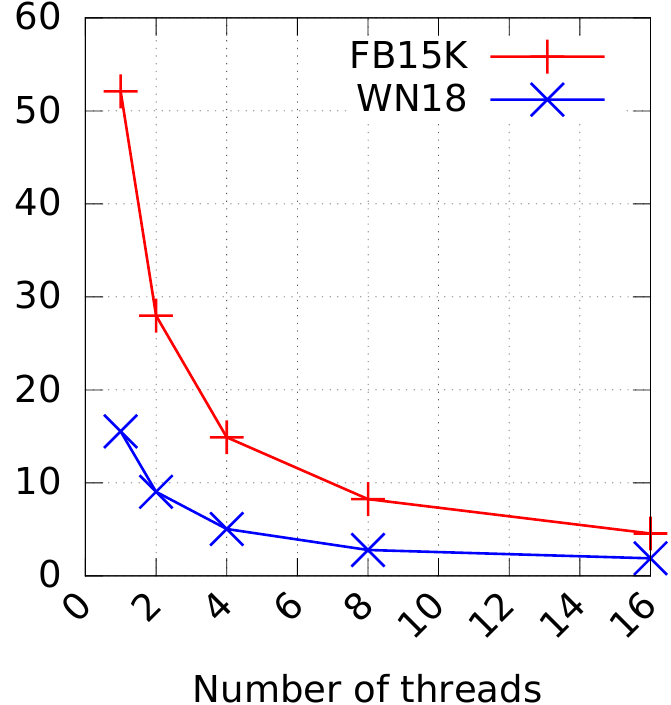}
    \caption{CPU run time per epoch (secs) of ANALOGY. The figure on the left shows the run time over increasing embedding sizes with 16 CPU threads; Figure on the right shows the run time over increasing number of CPU threads with embedding size 200.}
    \label{fig:speed}
\end{figure}

In Figure \ref{fig:speed} we show the empirical scalability of ANALOGY,
which not only completes one epoch in a few seconds on both datasets, 
but also scales linearly in the size of the embedding problem.
As compared to single-threaded AdaGrad,
our asynchronous AdaGrad over 16 CPU threads
offers 11.4x and 8.3x speedup on FB15K and WN18, respectively,
on a single commercial desktop.

\section{Conclusion}
\label{sec:conclusion}
We presented a novel framework for explicitly modeling analogical structures in multi-relational embedding, along with a differentiable objective function and a linear-time inference algorithm for large-scale embedding of knowledge graphs.
The proposed approach obtains the state-of-the-art results on two popular benchmark datasets, outperforming a large number of strong baselines in most cases. 

Although we only focused on the multi-relational inference for knowledge-base embedding, we believe that analogical structures exist in many other machine learning problems beyond the scope of this paper. We hope this work shed light on a broad range of important problems where scalable inference for analogical analysis would make an impact,
such as machine translation and image captioning (both problems require modeling cross-domain analogies).  We leave these interesting topics as our future work.

\section*{Acknowledgments}
We thank the reviewers for their helpful comments. This work is supported in part by the National Science Foundation (NSF) under grant IIS-1546329.

\bibliography{reference}
\bibliographystyle{icml2017}

\end{document}